\documentclass[runningheads]{llncs}
\usepackage{graphicx} 
\usepackage{algorithm}
\usepackage{enumitem}
\usepackage{xcolor}
\usepackage{amsmath}
\usepackage{amssymb}
\usepackage{mathtools}
\usepackage[noend]{algpseudocode}
\usepackage{booktabs}
\usepackage{xspace}
\usepackage{float}
\usepackage{orcidlink}
\usepackage{listings}
\usepackage{pgfplots}
\usepackage{pgfplotstable}
\usepackage{subcaption}

\usepgfplotslibrary{groupplots}
\pgfplotsset{compat=newest}
\usetikzlibrary{pgfplots.statistics,shapes,arrows,positioning,fit,backgrounds,matrix,chains,scopes,calc}

\newcommand{\ourevaluator}{OVAL\xspace}
\newcommand{\ghentEvaluator}{SolidLab Evaluator\xspace}
\newcommand{\ghentEvaluatorFirst}{SolidLab ODRL Evaluator\xspace}

\newcommand{\policy}[3]{\langle \mathbf{#1}, \mathbf{#2}, \mathbf{#3} \rangle}

\newcommand{\fullpolicy}[7]{\langle \mathbf{#1}, \mathbf{#2}, \mathbf{#3}, \mathbf{#4}, \mathbf{#5}, \mathbf{#6}, \mathbf{#7} \rangle}

\newcommand{\match}[2]{\mathsf{match}(#1,#2)}

\AtBeginDocument{%
  }

\author{
Jaime Osvaldo Salas\orcidlink{0000-0002-9353-8955} \and
Paolo Pareti\orcidlink{0000-0002-2502-0011} \and
Adeel Aslam \and
Christopher Maidens \orcidlink{0000-0002-4385-7202} \and
George Konstantinidis\orcidlink{0000-0002-3962-9303}}

\institute{University of Southampton, University Road, Southampton, SO17 1BJ, UK \\ Contact: \email{j.o.salas@soton.ac.uk}, \email{p.pareti@soton.ac.uk}}

\authorrunning{Salas et al.}

\lstdefinestyle{ODRL}{
basicstyle=\ttfamily, 
mathescape, 
columns=fixed,
fontadjust=true,
basewidth=0.5em
}

\begin{document}

\title{A Formally Grounded ODRL Evaluator: Implementation and Comparison}
\date{May 2026}

\maketitle

\begin{abstract}

The ODRL policy language is emerging as the de-facto standard for policy modelling data access and usage preferences, AI governance policies and data workflows in European dataspaces. The current standard has no mathematical formal semantics to describe how a system should implement policy evaluation. This has resulted in a variety of systems and tools that implement their own interpretation of the language, which limits interoperability and cannot guarantee consistent results. Based on an existing semantic model of ODRL, we formalise the problems of ODRL evaluation for the access control and monitoring scenarios, in both static and streaming settings, and we provide a novel, efficient algorithm and implementation. We present the first ODRL Evaluator with transparent formal semantics and supporting all rule types. We experimentally measure its performance, analysing different scalability dimensions related to policy complexity and size of the data on which a policy is evaluated. We compare our system with the state-of-the-art by providing a comparative review of existing ODRL evaluators, which highlights the differences in supported ODRL features and evaluation modes.

\keywords{ODRL \and computational policies \and evaluation \and monitoring \and access control \and streaming}
\end{abstract}

\section{Introduction}

The Open Digital Rights Language (ODRL) is a W3C recommendation, currently in version 2.2, that provides a data model and a base vocabulary for the definition of computational policies. Based on the RDF data model, and an OWL vocabulary, ODRL is a natively Semantic Web technology. 
While many of its features can be seen as domain agnostic, such as the ability to define agreements, and the deontic rules of permissions, prohibitions and obligations, it is primarily designed to regulate the use of digital content and services. ODRL has recently gained attention as the de-facto policy language to regulate the use of data by AI systems and compliance-focussed initiatives like the European Dataspaces \cite{yi2025compliance,golpayegani2024aiup,petrova2025knowledge}. 

Since its inception, several studies have attempted to define a formal semantics for ODRL, namely a precise definition of how ODRL policies should be interpreted. Most notably, Pucella and Weissman formalised ODRL 1.1 to logical formulas~\cite{pucella2006formal}, Steyskal and Polleres~\cite{steyskal2015towards} defined semantics for ODRL 2.0 based on rule matching and logic programs, Bonatti et al.~\cite{bonatti2025towards} proposed declarative semantics for ODRL 2.2, and Salas et al.~\cite{salas2025evaluation} defined formal semantics for ODRL 2.2. based on query answering. A community report on ODRL semantics is currently being worked on, but it does not provide a formal semantics in the mathematical sense.\footnote{\url{https://w3c.github.io/odrl/formal-semantics/}} In this paper, we reuse the semantics based on query answering defined by Salas et al., as it is the most recent and comprehensive semantics based on a mathematical model. 

Central to understanding ODRL semantics is the concept of an \emph{ODRL Evaluator}, defined in the ODRL specification as ``a system that determines whether the rules of an ODRL policy meet their intended action performance''~\cite{w3c2018ODRLInfoModel}.
Two scenarios that naturally occur in the context of ODRL evaluation are \emph{monitoring} and \emph{access control}, both of which have been studied in the literature \cite{salas2025evaluation,slabbinck2025interoperable} and are described in the community report on ODRL semantics. While monitoring typically refers to evaluating a policy on a \emph{state of the world}, intuitively a log of past actions, to see if any violation occurred, access control evaluates policies primarily on access requests, which describe an intent to perform future actions. 
In this article we use the term \emph{ODRL Evaluator} for a system that addresses one or both of the monitoring and access control scenarios, but note that the terms \emph{ODRL Monitor} and \emph{ODRL Authoriser} have been suggested to specify whether an ODRL processor targets the former or the latter \cite{rodriguez2025towards}. 
A number of systems have been developed to evaluate ODRL. However, most only cover a limited subset of the ODRL language, lack formal semantics, and are based on a partial, and sometimes unspecified, interpretation of the evaluation problem.

Motivated by this gap in the state-of-the-art, our first contribution is an extension of the ODRL semantics of Salas et al.~\cite{salas2025evaluation} that models the decision problems for both policy evaluation scenarios in a single framework and shows how these two problems coincide. Moreover, to ensure the applicability of ODRL evaluation in a realistic setting, we extend the monitoring scenario to a streaming (online) mode of evaluation, where the state of the world changes frequently.

Our second contribution is a novel algorithm for ODRL evaluation that covers all of the ODRL features in the chosen semantics. Notably, this algorithm performs a single-pass evaluation over the state of the world, allowing it to efficiently process incoming streams of events and access requests in constant time with respect to the size of the state of the world. 

As our third contribution we present \ourevaluator, a new open source ODRL Evaluator that provides a full implementation of this algorithm. We provide an analysis of the performance of this system, showing its scalability with respect to the size of the inputs. 
Our fourth main contribution is to review existing ODRL evaluators and provide a structured comparison against \ourevaluator based on features. For the most mature of such systems, we also assess performance relative to our system.

Our contributions provide a formal underpinning to ODRL evaluation, moving it from an underspecified and ad-hoc implementation task to a standardised and transparent method, proving clear connections between access control, offline monitoring, and online monitoring within a single semantic framework. Our implementation delivers a scalable system: OVAL exhibits linear growth with respect to both policy size and number of events, and produces results within milliseconds for existing datasets being up to two orders of magnitude more efficient than state-of-the-art.

\section{Preliminaries} \label{sec:preliminaries}

We now briefly introduce core concepts of ODRL semantics, and refer the reader to Salas et al.~\cite{salas2025evaluation} for more details. As the main building blocks of a policy, ODRL \emph{rules} are sets of \emph{constraints}, which in turn are boolean combinations of \textit{conditions} of the form $(\lambda \ op\ \rho)$. The left operand $\lambda$ is a property that is drawn from a known vocabulary, or Knowledge Graph, whose value is compared to the right operand $\rho$ by the operator $op$. The operator $op$ can be a binary operator between constants ($=$, $>$, $\geq$, $<$, $\leq$, $\neq$), a binary operator between sets ($\equiv$, $\in$, $\notin$, $\supset$, $\subset$), or a special operator that checks class membership ($=_{type}$). 

These constraints are evaluated on sets of \textit{events}, denoted \textit{states of the world}, where each event $e$ is a tuple of fixed length, whose indexes represent specific ODRL core components and/or left operands and thus allow the description of the event with respect to those features. For simplicity, we assume that the first value $t$ of event $e$ is always the timestamp of when the action has been performed, followed by the action value. With the exception of the indexes representing the timestamp and action, events can have null values. For example, assume a mapping between the first five indexes of an event and the features Party, Action, Asset and Page, respectively. Under this mapping, events $e_{Alice}: \;\langle \text{``2026-10-10T10:02:00''},Alice, Read, Book, 300\rangle$ and $e_{Bob}:\; \langle \text{``2026-10-10T10:01:00''}, Bob, Edit, Book,null \rangle$ represent, intuitively, the fact that Alice read page 300 of Book at datetime ``2026-10-10T10:02:00'', and Bob edited it on ``2026-10-10T10:01:00'' (pages not specified), and a state of the world $\omega=\{ e_{Alice}, e_{Bob}\}$ denotes a record that both events have occurred. 

We say that a rule $R$ \emph{matches} an event $e$, denoted by $\mathsf{match}(R, e)$ if all its constraints evaluate to true on $e$, that is when $(\bigwedge_{c \in R}c)(e)$ is true. Thus, when evaluating a rule, all of its constraints are conjuncted with each other. 

ODRL policies are usually read as \emph{\underline{Rule} for \underline{Party} to \underline{Action} \underline{Asset} subject to \underline{Constraints} and \underline{Refinements}}, \emph{e.g.} \emph{Permission for Alice to Read a Book subject to the constraint the date is less than December 12th 2026 and the Book to be refined to its Subset of Pages 1-400}. 
Such a rule could be formalised as the following set of constraints: $p_{Read}: \; \{ (Party = Alice) , (Action = Read) , (Asset = Book) , (Date < \text{``2026-12-12T00:00:00''}) , (Pages \geq 1) , (Pages \leq 400) \}$. 

An \emph{ODRL Lite policy} is a tuple $\policy{P}{F}{O}$, where $\mathbf{P}$, $\mathbf{F}$ and $\mathbf{O}$ are the sets of permission, prohibition and obligation rules, respectively. A state of the world $\omega$ is said to be \emph{valid} with respect to a policy $\policy{P}{F}{O}$ if the following are all true:
\begin{itemize}
    \item for all events $e \in \omega$, there exists a permission $P \in \mathbf{P}$ s.t. $\match{P}{e}$ is true.
    \item for all events $e \in \omega$, there is no prohibition $F \in \mathbf{F}$ s.t. $\match{F}{e}$ is true.
    \item for all obligations $O \in \mathbf{O}$, there exists an event $e \in \omega$ s.t. $\match{O}{e}$ is true.
\end{itemize}

ODRL 2.2 also allows for the definition of duties, remedies and consequences, which are obligations associated to permissions, prohibitions and obligations, respectively. A full ODRL policy is a 7-tuple $\Pi = \fullpolicy{P}{F}{O}{DP}{DPC}{FR}{OC}$ where $\mathbf{DP}$, $\mathbf{FR}$ and $\mathbf{OC}$ are sets of pairs of permissions and duties, prohibitions and remedies, and obligations and consequences, respectively, and $\mathbf{DPC}$ is a set of triples of permissions, duties and consequences. In such a case, a state of the world $\omega$ is valid w.r.t. a full ODRL policy if it is \textit{valid} w.r.t. $\policy{P}{F}{O}$ and: 
\begin{itemize}
    \item for all permission-duty pairs $\langle D,P \rangle \in \mathbf{DP}$ and for all events $e \in \omega$, if $\match{P}{e}$ is true, then there exists an event $e' \in \omega$ that occurred before $e$ ($t' \leq t$) such that $\match{D}{e'}$ is true, and
    \item for all prohibition-remedy pairs $\langle F,R \rangle \in \mathbf{FR}$ and for all events $e \in \omega$, if $\match{F}{e}$ is true, then there exists an event $e' \in \omega$ that occurred after $e$ ($t' \geq t$) such that $\match{R}{e'}$ is true, and
    \item for all obligation-consequence pairs $\langle O,C \rangle \in \mathbf{OC}$ such that $O$ contains a time constraint $c_t=(Datetime \leq t)$, if there is no event $e$ such that $\match{O}{e}$ is true, then there must exist events $e'$ and $e''$ that occurred after $e$ ($t \leq t' \wedge t \leq t'')$ such that $\match{C}{e'}$ and $\match{O\setminus\{ c_t\}}{e''}$ are true,
    \item for all permission-duty-consequence triples $\langle D,P,C \rangle \in \mathbf{DPC}$ such that $D$ contains a time constraint $c_t=(Datetime \leq t)$, if there exists an event $e$ such that $\match{P}{e}$ is true and there is no event $e'$ such that $\match{D}{e'}$ is true, then there must exist subsequent events $e''$ and $e'''$ such that $\match{C}{e''}$ and $\match{D\setminus\{c_t\}}{e'''}$ are true.
\end{itemize}

In our running example, state of the world $\{e_{Alice},e_{Bob}\}$ would violate a policy that has permission $p_{Read}$ as the only rule, because the event $e_{Bob}$ is not explicitly permitted, and thus it is implicitly prohibited. Note that the chosen semantics does not include reasoning, that is the inference of new facts, and constants are assumed to be disjoint from each other, following the unique name assumption.

\section{Related Work}

\subsection{Formalisations}
The earliest version of ODRL (1.0) was tailored for digital rights management, whereas currently it has been adopted for more general usage including the usage of content and services~\cite{iannella2018odrl}. 
ODRL 2.0 added prohibitions and duties as rule types, resembling the concepts of deontic logic more closely.
The current specification of ODRL (2.2) was formalised by Iannella and Villata~\cite{iannella2018odrl}. Notably, it builds on ODRL 2.0 by adding additional temporal or conditional rules (e.g., consequences and remedies). Finally, constraints were structured to triples comprised of a left operand, an operator and a right operand.

We now review the most relevant pieces of work in the literature that address similar semantics problems. Garc{\'\i}a et al.~\cite{garcia2005formalising} employed semantic web technologies such as OWL~\cite{mcguinness2004owl}, to translate ODRL (in XML format) to OWL ontologies and RDF graphs, which allowed them to leverage the IPROnto ontology for digital rights management~\cite{delgado2003ipronto}. However, their approach was tailored for ODRL 1.1, which is deprecated, and lacks the richer tapestry of rule types such as prohibitions, and obligation. 
Pucella and Weissman~\cite{pucella2006formal} proposed a translation of ODRL 1.1 Policies to logical formulas. The normalisation steps discussed in this study could also be applied to their constraints, which would also allow logical normal forms.  
Steyskal and Polleres~\cite{steyskal2015towards} earliest proposed semantics for ODRL 2.0+ based on rule-matching and logic programs; notably their semantics took into consideration the implicit dependencies between actions. 
Bonatti et al.~\cite{bonatti2025towards} defined declarative semantics for ODRL 2.2 that covered the main components of the language, while also defining the notion of compliance with a policy, providing illustrative examples with complex policies that include duties, remedies and consequences. 
Salas et al.~\cite{salas2025evaluation} defined formal semantics for ODRL 2.2 inspired by query evaluation over a state of the world that acts as a database. Moreover, they introduced the notion of policy comparison inspired by query containment.
Slabbinck et al.~\cite{slabbinck2025interoperable} defined a vocabulary to describe interoperable ODRL Evaluation reports and also presented the \ghentEvaluatorFirst, an ODRL Evaluator for access control scenarios. 
Kebede et al.\cite{kebede2018critical} discuss the representational power of ODRL, reinforced by presenting use-cases and examples, and highlights some of the limitations of ODRL. For example, the ambiguous semantics of duties (as per the currently published recommendations) or the granularity of parties (e.g., certain members of an organisation). 
De Vos et al.~\cite{de2019odrl} implement policy and compliance checking by translating policies into answer-set programs to check for compliance. The semantics of answer-set programming are ideal for applications that adopt a closed-world assumption. Their proposed model is ideal for representing real-world legal frameworks such as the GDPR. 
Kieffer et al.~\cite{kieffer2025composing} propose an approach that computes the least restrictive license that complies to a set of policies. If such a license does not exist, one can assume that there are no events that are valid w.r.t. all policies simultaneously. Therefore, said approach could be used to verify policy overlap. However, the problems of containment and equivalence were not a focus of their work, and cannot be trivially derived.

\subsection{ODRL Evaluation Systems}

The \ghentEvaluator ~\cite{slabbinck2025interoperable} leverages the various standards defined by the W3C ODRL Community Group such as ODRL Evaluators and Compliance Reports as defined per the ODRL Information Model\footnote{ODRL Information Model 2.2: https://www.w3.org/TR/odrl-model/\#terminology} 
and supports reasoning tasks when evaluating policies using the EYE~\footnote{EYE reasoner: https://github.com/eyereasoner/eye} reasoner. Its documentation specifies that the only rule types that are currently supported are Permissions and Prohibitions, with a partial support of Duties through the State of the World object.\footnote{\url{https://github.com/SolidLabResearch/ODRL-Evaluator/blob/main/ODRL-Support.md}, accessed 13/07/2026}

The ODRE Framework~\cite{cimmino2024opendigitalrightsenforcement} is a framework designed to provide enforcement capabilities to ODRL policies. They achieve this by adding more features to ODRL, creating layers with additional expressivity, which combine different technologies to enforce the features. For example, they add the ability to define data variables using the interpolated language Freemarker~\footnote{https://freemarker.apache.org/docs/dgui\_misc\_alternativesyntax.html}. As a result, they produce enforceable policies through Python and Java. With regards to policy evaluation specifically, it supports only the \texttt{odrl:dateTime} left operand, and only numeric and symbolic ODRL operators~\cite{salas_trejo_2026_18862527}. 
Due to lack of documentation and problems in running the code, we were unable to test their system. However, we could access a demonstration that illustrated its functioning.

ODRL-PAP\footnote{ODRL-PAP: https://github.com/wistefan/odrl-pap} is a mostly Java implementation that translates ODRL policies to equivalent rego\footnote{Open Policy Agent: https://www.openpolicyagent.org/docs/policy-language} expressions which are then fed into an Open Policy Agent (OPA). The result of the evaluation is given as input to the Policy Enforcement Point, which the user interacts with, and can actually enforce the policy. However, the implementation seemingly only takes into consideration a limited variety of constraints, tailored for a curated list of vocabularies (based on concepts from the DOME-Marketplace project\footnote{DOME-Marketplace project: https://dome-marketplace.eu/}). For example, for ODRL, there are only translations for constraints with \texttt{odrl:dayOfWeek}, \texttt{odrl:hourOfDay} and \texttt{odrl:currentTime}.

The ODRL Manager~\footnote{ODRL Manager: https://github.com/Prometheus-X-association/odrl-manager} is described as an ODRL Library for Node.js with functions to facilitate the parsing, validation and interpretation of ODRL 2.2 Policies. Upon a preliminary analysis, it appears to have the infrastructure to support ODRL Evaluation. However, we were unable to find a publication that formalises their semantics, and only accepts ODRL policies in JSON format with little space for deviations in notation.

The MOSAICrOWN \footnote{MOSAICrOWNpolicy engine: https://github.com/mosaicrown/policy-engine} is a Python implementation of a policy engine that aligns with Salas et al.'s semantics~\cite{salas2025evaluation} insofar as ODRL policies that mention targets, assignees, actions and purposes. This makes sense since the implementation relies on a SPARQL query to traverse the graphs that represent the ODRL policies considering that the semantics are based on query evaluation. Access requests are modelled as SQL query requests. 

Gaia-X transforms contracts, policies and legislation into ODRL and performs reasoning using their Policy Reasoning Engine \footnote{GAIA-X Across  [ https://gitlab.com/gaia-x/lab/policy-reasoning/odrl-vc-profile ]  [ https://docs.gaia-x.eu/technical-committee/architecture-document/24.04/ Compliance and Sovereign Data Exchange work packages ] }
Their engine requires a set of policies (expressed using Verifiable Credentials\cite{w3c_vc_data_model_2_0}), information about the consumer such as their location, and the consumer's usage intentions expressed as an ODRL Request. Their policies employ alternative constraints to those of the ODRL standard, where the left operands are formulated as JSON Path queries. The policies are translated into RDF triples and loaded into a GraphDB instance, after which SPARQL queries are executed over the graph. If the queries return any results, then the system concludes that an agreement can be reached. This appears to align with the ODRL Lite semantics, but we were unable to find publications that describe their evaluation semantics formally to confirm this.

\section{ODRL Evaluation Problems}\label{sec:evaluatingScenarios}

In this section we formalise the monitoring and access control scenarios into a single semantic framework, and show under which conditions the chosen semantics allows translations from one to the other. 
Moreover, we define the monitoring scenario also under another dimension: streaming vs static states of the world.

\subsection{Monitoring}

The monitoring problem asks the question, given a compilation of events that have occurred (a state of the world) and an ODRL policy, whether the state of the world is valid with respect to the policy.  As is normally the case for monitoring scenarios, the state of the world $\omega$ corresponds to a database containing a set of ordered events that have been logged by an external agent. While there is no strict data model that such log should adhere to, for simplicity we will assume that it is a table, with each row representing a distinct event, and each column representing a feature of these events. Given a policy, we assume there is a partial mapping between each column of the log and the features, such  as the core components or the left operands, of the policy. We also assume that the first value in any event is the timestamp at which it occurred. If a column is not mapped to any feature of a policy, the information in that column is not used by the policy for evaluation. If a policy contains a feature that is not mapped to a column of the table, we assume that the value for that feature for any event contained in the policy is \emph{null}.

\begin{table}[]
    \centering
      \caption{Example of a state of the world}
    \begin{tabular}{llllll}
    \toprule
    Datetime & Action & Actor & Asset & Resolution & Pages \\
    \midrule
    2026-10-10T10:00:00\phantom{aaa} & Print\phantom{aaa} & Alice\phantom{aaa} & Book\phantom{aaa} & 500 dpi\phantom{aaaaa} & 300  \\
    2026-10-10T10:01:00 & Edit & Bob & Book & null & null \\
    2026-10-10T10:02:00 & Read & Alice & Book & null & 300 \\
    2026-10-11T09:00:00 & Print & Bob & Picture & 650dpi & null \\
    \bottomrule
    \end{tabular}
  
    \label{tab:sotw1}
\end{table}

Table \ref{tab:sotw1} shows an example of a state of the world. This state of the world contains four events about Alice and Bob performing actions on certain assets (a book and a picture). The property \textit{Resolution} is a constraint associated with the \textit{Print} action, while \textit{Pages} is a constraint associated with the \textit{Book} asset.

We further distinguish two modes of monitoring evaluation. In log-based (offline) monitoring evaluation, a policy is evaluated against a single state of the world object. This  corresponds directly to the definition of validity in Section \ref{sec:preliminaries}. 
\begin{definition}
    Determining whether a state of the world is valid with respect to a given ODRL policy is the offline monitoring evaluation decision problem. 
\end{definition}

As an example, let $\Pi$ be a policy with $\mathbf{P}=\{p_{Read}\}$ as a permission and  $\mathbf{DP}=\{\langle d_{Print}, p_{Read} \rangle\}$ a permission-duty pair where $d_{Print}$ is the rule $\{(Party=Alice), (Action=Print), (Asset=Book), (Resolution=500dpi), (Pages>200) \}$. This policy intuitively states that Alice is allowed to Read a Book if she has Printed the Book at a resolution of 500dpi and at least 200 pages. Under the monitoring scenario, the first event in Table \ref{tab:sotw1} will be matched by $d_{Print}$, and the third event will be matched by $p_{Read}$. Moreover, since the first event occurred before the third event, the permission-duty pair is satisfied. 

In stream-based (or online) monitoring, a policy is evaluated continuously against a stream of events $\overline\omega$, which is a time-ordered list of state of the worlds. Stream $\overline\omega$ is time-ordered if, for every event $e$ in batch $\overline\omega_i$, the time event $e$ is recorded to have happened is greater than the time recorded of every event $e$ in any batch $\overline\omega_j$ such that $j < i$. 
Intuitively, each state of the world in $\overline\omega$ could contain as little as a single event. Evaluating a policy over a stream of events should lead to the same result as evaluating the whole stream offline in a single run.

\begin{definition} \label{def:streaming}
A stream of events $\overline\omega$ is valid w.r.t.\ an ODRL policy $\Pi$ if and only if the state of the world $\bigcup_{\omega \in \overline\omega} \omega$ is valid w.r.t.\ $\Pi$.
\end{definition}

\begin{definition}
    Determining whether a stream of events $\overline\omega$ is valid w.r.t.\ an ODRL policy is the online monitoring evaluation decision problem. 
\end{definition}

\subsection{Access Control}

In an access control scenario, a requester sends an ODRL policy consisting of one or more permission rules that specify the intended actions by the requester over an asset. The evaluator must then take into consideration any internal ODRL policies and, potentially, a state of the world to decide if the request is granted. The access control scenario can be modelled as a monitoring scenario if we can extend the state of the world with a \emph{canonical event} that simulates the requested action having taken place. Intuitively, an access request should be granted if the state of the world preserves its validity after adding to it an event that matches that request.

\begin{definition}
    Given a timestamp $t$ and a rule $R$ where each constraint $c_i \in R$ is of the form $(\lambda_i = \rho_i)$, the canonical event at $t$ $e_R^t$ is the event $\langle t,\rho_1, \rho_2, \ldots, \rho_{n}\rangle$, where $n$ is the number of constraints in $R$.
\end{definition}

We can now define the access control decision problem in terms of the offline monitoring decision problem.

\begin{definition} \label{def:access_control}
    Given a rule $R$, a policy $\Pi$,
    a timestamp $t$ and a state of the world $\omega$, an access request for $R$ is accepted if the state of the world $\omega \cup \{e_R^t\}$ is valid w.r.t. $\Pi$.
\end{definition}

As an example, let the request $R$ be the rule $\{(Party=Alice),(Action=Read),(Asset=Book),(Pages=250)\}$ and let the policy $\Pi$ be the running example. Alice sends the request $R$ at ``2026-10-12T15:00:00'' to an endpoint with policy $\Pi$.
Let us assume first a case where the state of the world $\omega$ is empty. In such a case, the evaluation will check if the state of the world consisting of the single event $\langle\text{``2026-10-12T15:00:00''},Alice,Read,Book,250 \rangle$ is valid w.r.t. $\Pi$. Evidently, this is not true because the permission-duty pair $\langle p_{Read}, d_{Print} \rangle$ cannot be satisfied. Therefore, this request is not accepted.
On the other hand, if we assume the state of the world is the one shown in Table \ref{tab:sotw1} extended with $e_R^t$, it is valid w.r.t. $\Pi$. Therefore, this request is accepted. 

In the access control scenario, the usage of consequence, remedy and obligation rules is less natural. If a required duty of a permission is not fulfilled, a request to exercise that permission will be denied, thus never triggering its consequence rule. The same can be said of forbidden actions, that cannot trigger remedies as they cannot be violated. 
When we consider a policy that does not contain consequences, remedies or obligations, we can notice that the access control scenario can directly model the streaming monitoring scenario. 
\begin{theorem}
 Given a stream of events $\overline\omega$, such that every batch $\overline\omega_i \in \overline\omega$ contains a single event, and an ODRL policy $\Pi$ that contains only permission, prohibition and duty rules, $\overline\omega$ is valid w.r.t.\ $\Pi$ if and only if, for every batch $\overline\omega_i \in \overline\omega$, access request $R$ is accepted over state of the world $\bigcup_{j<i} \overline\omega_j$ at time $t$, where $t$ is the time stamp of the single event $e \in \overline\omega_i$, and $R$ is a permission that has $e$ as the canonical event.
\end{theorem}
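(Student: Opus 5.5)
The plan is to reduce both sides to statements about individual events and compare them event by event. Write the stream as $\overline\omega=(\overline\omega_1,\dots,\overline\omega_n)$ with $\overline\omega_i=\{e_i\}$, and set $\omega_{<i}=\bigcup_{j<i}\overline\omega_j$. By Definition~\ref{def:access_control}, the request $R_i$ (the permission whose canonical event at $t_i$ is $e_i$) is accepted exactly when $\omega_{<i}\cup\{e_{R_i}^{t_i}\}=\omega_{<i}\cup\{e_i\}=\omega_{\le i}$ is valid w.r.t.\ $\Pi$. So the right-hand side says that every prefix $\omega_{\le i}$, for $i=1,\dots,n$, is valid. By Definition~\ref{def:streaming}, the left-hand side says that $\omega_{\le n}$ is valid. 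It therefore suffices to show that, for a policy with only permissions, prohibitions and permission-duty pairs, validity of $\omega_{\le n}$ is equivalent to validity of all its prefixes.

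First I would observe that validity for such a policy is a conjunction of \emph{per-event} conditions. Since $\mathbf{O}$, $\mathbf{FR}$, $\mathbf{OC}$ and $\mathbf{DPC}$ are empty, $\omega$ is valid iff every $e\in\omega$ satisfies three conditions:
\begin{enumerate}
\item some $P\in\mathbf{P}$ matches $e$;
\item no $F\in\mathbf{F}$ matches $e$;
\item for every $\langle D,P\rangle\in\mathbf{DP}$ with $\match{P}{e}$, there is $e'\in\omega$ with $t'\le t$ and $\match{D}{e'}$.
\end{enumerate}
Call this property $\phi(e,\omega)$. The first two conditions depend only on $e$. The third depends on $\omega$, but only through the events whose timestamp is at most $t$.

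Given this, the direction from right to left is immediate: taking $i=n$, $\omega_{\le n}$ is valid. For the direction from left to right, fix $i$ and an event $e_k\in\omega_{\le i}$, so $k\le i$. From validity of $\omega_{\le n}$ we get $\phi(e_k,\omega_{\le n})$. The first two conditions transfer to $\omega_{\le i}$ directly. For the third, the witness $e'$ has $t'\le t_k$, and by time-ordering of the stream it must lie in some batch $j\le k\le i$. The one delicate case is $t'=t_k$ with $e'$ in a later batch, which strict time-ordering excludes. Hence $e'\in\omega_{\le i}$, and $\phi(e_k,\omega_{\le i})$ holds, so $\omega_{\le i}$ is valid.

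I expect the main obstacle to be making the correspondence between $R_i$ and $e_i$ rigorous. The canonical event must coincide with $e_i$ in every column, including null columns and the positional ordering of constraints. I would handle this by stipulating that $R_i$ lists equality constraints in column order, so that $e_{R_i}^{t_i}=e_i$. I would also need to treat non-null columns carefully, and avoid relying on $\match{R_i}{e_i}$, which isn't needed since only the canonical event enters validity. A secondary point is the use of strict ordering between batches. This is where the restriction to permissions, prohibitions and duties matters: remedies and consequences look forward in time, so their conditions fail to be downward closed under prefixes.
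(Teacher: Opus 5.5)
Your proposal is correct and follows essentially the same route as the paper. Acceptance of each request amounts to validity of the prefix $\omega_{\le i}$; the backward direction takes the last prefix, and the forward direction notes that a denial would require a permission, prohibition, or duty failure contradicting validity of the whole stream. Your version is more careful than the paper's sketch: you check every event of the prefix, not only the newly added one, and you make explicit that strict time-ordering of batches keeps each duty witness inside the prefix.
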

\begin{proof}
$\Rightarrow$ If stream $\overline\omega$ is valid w.r.t\ $\Pi$, but access request $R$ of an event $e$ in $\overline\omega_i$ is denied over state of the world $\bigcup_{j<i} \overline\omega_j$, given the available rule types, it must be because 1) $e$ is not permitted, 2) $e$ is prohibited or 3) $e$ is permitted, but one of the associated duties is not matched to any event that predates $e$. However any of those three conditions would contradict the validity of $\overline\omega$ w.r.t\ $\Pi$.

$\Leftarrow$ If every access request $R$ is accepted, then by Definition \ref{def:access_control} $\{e\} \cup \bigcup_{j<i} \overline\omega_j$ is valid w.r.t. $\Pi$ for each event $e$ in $\overline\omega$. Then trivially, when considering the last event, the set of all events in $\overline\omega$ is valid w.r.t.\ $\Pi$ and thus the stream of events is valid w.r.t. $\Pi$ too by Definition \ref{def:streaming}.
\end{proof}

\section{The \ourevaluator Evaluator}

We now present the \ourevaluator evaluator and discuss its improvements against the state of the art. In particular, we describe a novel algorithm to process all types of ODRL rules using a single pass over the events of a state of the world, thus enabling scalable processing of all types of evaluation, including streaming.  
The main features of \ourevaluator are summarised in Table \ref{tab:featureComparison} along with a comparison with existing systems. \ourevaluator is the first ODRL Evaluator to be grounded in formal semantics, as it implements nearly all of the semantics defined in \cite{salas2025evaluation} (consequences of obligations being the only feature not supported yet). 
Note that the chosen semantics does not support set and membership constraint operators, whose mode of evaluation is not fully specified in ODRL. This grounding in formal semantics is a fundamental property, which makes the evaluation non-ambiguous. It enables precise testing, reuse and comparison with future systems, as the expected output of each policy evaluation is defined by the semantics.

{
\begin{table}[]
\newcommand{\rotate}[1]{\hspace{5px}\rotatebox{90}{#1}}
    \centering
     \caption{Features comparison of different ODRL evaluators.}
    \begin{tabular}{lcccccccccccccccc}
         System & \rotate{Permissions} & \rotate{Prohibitions} & \rotate{Obligations} & \rotate{Duties} & \rotate{Consequences} & \rotate{Remedies} & \rotate{Access Control} & \rotate{Monitoring} & \rotate{Reasoning} & \rotate{Constraints} & \rotate{Logical Constraints} & \rotate{Count} & \rotate{Time constraints} & \rotate{Ontologies} & \rotate{Stream Processing} & \rotate{Formal Semantics}  \\
         \midrule
         \ourevaluator & \checkmark & \checkmark & \checkmark & \checkmark & \checkmark & \checkmark & \checkmark & \checkmark &  & \checkmark & \checkmark & \checkmark & \checkmark & \checkmark & \checkmark & \checkmark \\
         \ghentEvaluator & \checkmark & \checkmark & \checkmark & & & & \checkmark & & \checkmark & \checkmark & \checkmark & & \checkmark & \checkmark\\
         ODRE Framework & \checkmark & \checkmark &  & & & & \checkmark & & & \checkmark & & & \checkmark & \checkmark  \\
         ODRL-PAP & \checkmark & & & & & & \checkmark & & & \checkmark & \checkmark & & \checkmark &  & \\
         ODRL-Manager & \checkmark & \checkmark & \checkmark & \checkmark & \checkmark & \checkmark & & & & \checkmark & \checkmark \\
         MOSAICrOWN & \checkmark & \checkmark &  & & & & \checkmark\\
    \end{tabular}
   
    \label{tab:featureComparison}
\end{table}}

As a comprehensive implementation of ODRL, \ourevaluator supports the basic rule types permissions, prohibitions and obligations, along with the more complex rules types of duties of permissions, consequences of permission duties, and prohibition remedies. It supports constraints and refinements. 
Notably, \ourevaluator is the first evaluator to fully support all of the ODRL logic constraints, on all rule types, including nested constraints. Nested constraints allow for the expression of complex logic expressions by combining the \emph{and}, \emph{or} and \emph{xor} logic operators. Time constraints are allowed through the \texttt{odrl:dateTime} left operand. Moreover, logs of events, or of exercised accesses requests are taken into account when evaluating the \texttt{odrl:count} constraint, which can be used to limit the number of times a permission can be exercised. \ourevaluator is domain-agnostic, as it supports ontologies and custom vocabularies by allowing any term, identified by an IRI, to be used as an action, party, asset or left operand.

The function that \ourevaluator uses to answer the \emph{evaluation} decision problem takes as input, at a minimum, an ODRL policy and a state of the world. In the implementation, a state of the world is assumed to be a CSV file that contains column names \texttt{odrl:Party}, \texttt{odrl:Action}, \texttt{odrl:Asset}, that map to the three main components of an ODRL rule, along with an additional column with the IRI of $\lambda$ as the header, for each left operand $\lambda$ found in the constraints of the policy, additionally prefixed with \texttt{odrl:Party}/\texttt{odrl:Action}/\texttt{odrl:Asset} and a space if it is the refinement of the party/action/asset component, respectively.

To support the streaming case of the monitoring scenario, as defined in Section \ref{sec:evaluatingScenarios}, an \ourevaluator evaluation outputs, along with the boolean result of the decision problem, an \emph{evaluation state} object, which records key pieces of information which can be recorded to be used when processing future batches of events. To illustrate the use of this object, imagine a policy containing a prohibition to access a file, subject to the remedy to make a payment to the file provider, a first batch of events that contain a record of the file being accessed, followed by a second batch of events that contain a record of a payment to the file provider. The evaluation on the first batch detects a violation, as the forbidden action has been performed, but it also outputs a state of the evaluation object that records the need to perform the remedial action. Based on this information, when processing the second batch the evaluator will consider the stream compliant, and output an updated state of the evaluation where the remedial action is no longer needed.

The \emph{evaluation state} object includes the information contained in the original policy used for the evaluation, and extends it in the following two ways. Firstly, each rule is annotated with the number of events previously matched against this rule ($matches\_count$) and whether the policy requires an event to match this rule in the future as a condition for validity ($required$). Then, a global field is recorded: $potential\_validity$, which is set to $0$ if the evaluation is invalid, and no future events can restore validity, for example if a prohibition has been violated, but no remedy for it exists. It is easy to see that the size of the evaluation state is constant with respect to the size of the policy. Therefore, for a policy of fixed size, and assuming all events are of fixed size, evaluating an additional event can be done in constant time, regardless of how many events have been evaluated previously, as all the inputs of the evaluation function remain constant in size. In a monitoring scenario, this property allows efficient processing of streams of events, and ensures that the complexity of evaluating of a log of events scales linearly with the size of the number of events, that is, with the size of the state of the world.

\begin{algorithm}[ht]
\begin{algorithmic}[1]
\Require row $x$, rule $R$, evaluation state $\Sigma$ (with $R$ being a rule in $\Sigma$)
\Function{CheckMatch}{$x, R, \Sigma$}
    \If{$x$ satisfies all conditions of $R$}
        \State $R.matches\_count \gets R.matches\_count + 1$
        \If{$R.required = 1$}
            \State $R.required \gets 0$
        \EndIf
        \State \Return True
    \EndIf
    \State \Return False
\EndFunction
\end{algorithmic}
\caption{Rule matching } 
\label{algorithm:helper}
\end{algorithm}

\begin{algorithm}[ht]
\begin{algorithmic}[1]
\Require Policy $\Pi$, state of the world $\mathcal{D}$, optional evaluation state $\Sigma$
\Function{Evaluate}{$\Pi, \mathcal{D}, \Sigma$}
    \If{$\Sigma$ not provided} \textit{//Initialise State of the Evaluation}
        \State initialise $\Sigma$ with $potential\_validity=1$; $counts$ and $required$ at 0 for rules
    \EndIf
    \State sort $\mathcal{D}$ by time ascending
    \State $validity \gets \Sigma.potential\_validity$
    \For{each row $x \in \mathcal{D}$}
        \State $M_P \gets \emptyset$, $M_F \gets \emptyset$
        \For{each permission $P$} \textit{// Check all rules once on this row}
            \State run \textproc{CheckMatch} on all duties of $P$ and their consequences
            \If{\Call{CheckMatch}{$x,P,\Sigma$}} add $P$ to $M_P$ \EndIf            
        \EndFor

        \For{each prohibition $F$}
            \State run \textproc{CheckMatch} on all remedies of $F$
            \If{\Call{CheckMatch}{$x,F,\Sigma$}} add $F$ to $M_F$ \EndIf
        \EndFor
        \For{each obligation $O$}
            \State \Call{CheckMatch}{$x,O,\Sigma$}
        \EndFor
        \If{$M_P = \emptyset$} \textit{// Detect violations of Permissions}
            \State $\Sigma$.potential\_validity $\gets$ $0$; $validity \gets 0$;
        \EndIf
        \For{each $P \in M_P$ and its duties $D$} \textit{// Enforce Duties and Consequences}
            \If{$D.matches\_count = 0$}
                \If{$D$ has no consequences}
                    \State $\Sigma$.potential\_validity $\gets$ $0$; $validity \gets 0$;
                \Else
                    \State set $D.required \gets 1$ and all consequences required
                \EndIf
            \EndIf
        \EndFor
        \For{each $F \in M_F$}
            \If{$F$ has no remedies} \textit{//Detect violations of Prohibitions}
                \State $\Sigma$.potential\_validity $\gets$ $0$; $validity \gets 0$;
            \Else  \textit{//Enforce Remedies}
                \State set all remedies required
            \EndIf
        \EndFor
    \EndFor
    \State check obligations: if never matched $\Rightarrow$ $validity \gets 0$
    \State check duties, consequences, remedies: if any still required  $\Rightarrow$ $validity \gets 0$
    \State \Return $validity$, $\Sigma$
\EndFunction
\end{algorithmic}
\caption{ODRL Evaluation Algorithm}
\label{algorithm:main}
\end{algorithm}

The algorithm of the evaluation function of \ourevaluator is described in Algorithm \ref{algorithm:main}. This algorithm evaluates a state of the world object in a single pass from the oldest to the newest event. As each event is evaluated against all rules, the helper function in Algorithm \ref{algorithm:helper} keeps track of how many times each rule matched an event through the $matches\_count$ property, and disables the $required$ flag of any matched rule which had such flag active. Intuitively, the $matches\_count$ property records how past events matched rules and it is used, for example, to check if a duty of a permission had been fulfilled. The $required$ flag instead defines the requirement of a rule to be matched by a subsequent event, for example to require a remedy to be eventually fulfilled. We provide an implementation 
of our evaluator along with a test suite of all of the core features of the evaluator, that contains many examples of policies, states of the world, and the expected results. This implementation also includes scripts to run the scalability tests presented next in Section \ref{sec:experiments}.

\section{Experimental Evaluation} \label{sec:experiments}

To evaluate the performance of our evaluator we have developed two generators capable of creating synthetic ODRL policies and states of the world based on a number of parameters. 

A synthetic ODRL policy is created by instantiating a given number of permission, prohibition and obligation rules. Permission rules can be configured to have one or more duties. Specific actions, parties, assets and left operands are selected randomly from a pool of 10 constants each, chosen at random for each policy, with realistic values taken from the ODRL vocabulary. 
Rules are created by first defining the action it applies to, and then with a 50\% chance each, specifying the asset and assignee components too.  Between 0 and 2 refinements/constraints are then assigned to each component and to the rule itself. A synthetic constraint is generated by randomly selecting a left operand, a numeric operator ($ <, \leq, =, \neq, >, \geq$) and an integer between 0 and 100 for the right operand.

A synthetic state of the world is generated from a policy by first extracting the set of features $\lambda$ that appear in any condition $(\lambda \ op\ \rho)$ in the rules of policy, and adding to it \texttt{odrl:dateTime} to ensure all events can be chronologically ordered. This set is then turned into a list to determine the columns of the state of the world table. A predefined number of rows is then created. Each row is generated by first selecting a random permission $P$, and then instantiating random values for each feature in the list, based on the conditions of $P$. 
If for a feature $\lambda$ there is no $(\lambda \ op\ \rho)$ in $P$, the field corresponding to that feature is left null with a 50\% chance (if it is not \texttt{odrl:Action}), or it is otherwise assigned one of 100 values randomly. Otherwise, one such $(\lambda \ op\ \rho)$ in $P$ is selected, and the field is populated with a value $v$ such that $v \; op\ \rho$ is true. For example, if $op$ is $>$ and  $\rho$ is $50$, a value satisfying this condition is created by adding a random number between $1$ and $100$ to $50$. 
This approach was chosen so that the generated rows will usually match existing permissions. While this does not guarantee validity, due to the complex ways the various randomly generated rules can interact with each other, it creates a more realistic test set by generating a significant amount of valid states of the world, alongside invalid ones.

\begin{figure}[ht!]
    \centering
    \small
    \begin{tikzpicture}[mark size = 1.5pt]
        \begin{groupplot}[
            group style={group size=2 by 2,  vertical sep = 1.0cm, horizontal sep = 1.5cm},
              width=.4\linewidth, height=.4\linewidth,
              grid=both,
              ylabel = Runtime (seconds),
              ymin = 0,
              ymax = 0.2,
            title style={at={(-0.3,1)},anchor=north west},
            legend style={nodes={scale=0.6, transform shape}}
            ]
            \nextgroupplot[
            xlabel = Policy Size,
            title={a)},
            legend style={at={(0,1)},anchor=north west}
            ]
            \addplot[only marks, mark = x, red] 
         table[col sep = comma, x = policy_size , y = runtime_seconds] {experiments/test_results/permissions_sotw_size_100_no_duties.csv};
         \addplot[red, mark = x] 
         table[col sep = comma, x = policy_size , y = runtime_seconds] {experiments/test_results/avg_permissions_sotw_size_100_no_duties.csv};
        
         \addplot[only marks, mark = o, blue] 
         table[col sep = comma, x = policy_size , y = runtime_seconds] {experiments/test_results/all_sotw_size_100_duties_remedies_all.csv};
         \addplot[blue, mark = o] 
         table[col sep = comma, x = policy_size , y = runtime_seconds] {experiments/test_results/avg_all_sotw_size_100_duties_remedies_all.csv};
         \legend{Permissions only,,Complex,,}
         
         \nextgroupplot[ xlabel = {Number of events}, 
         ylabel = {},
         xmin = 0,
         xmax = 1000,
         ymax = 1,
         title = {b)},
         legend style={at={(0,1)},anchor=north west}]
         \addplot[only marks, mark = x, red] 
         table[col sep = comma, x = sotw_size , y = runtime_seconds] {experiments/test_results_sotw/sotw_10_per_rule.csv};
         \addplot[red, mark = x] 
         table[col sep = comma, x = sotw_size , y = runtime_seconds] {experiments/test_results_sotw/avg_sotw_10_per_rule.csv};
         \addplot[only marks, mark = o, blue] 
         table[col sep = comma, x = sotw_size , y = runtime_seconds] {experiments/test_results_sotw/sotw_20_per_rule.csv};
         \addplot[blue, mark = o] 
         table[col sep = comma, x = sotw_size , y = runtime_seconds] {experiments/test_results_sotw/avg_sotw_20_per_rule.csv};
         \addplot[only marks, mark = square, black] 
         table[col sep = comma, x = sotw_size , y = runtime_seconds] {experiments/test_results_sotw/sotw_30_per_rule.csv};
         \addplot[black, mark = square] 
         table[col sep = comma, x = sotw_size , y = runtime_seconds] {experiments/test_results_sotw/avg_sotw_30_per_rule.csv};
         \legend{30 rules,,60 rules,,90 rules}
         \nextgroupplot[xmax = 11, 
         xlabel = {Duties per Permission},
         ymax = 1.7,
         title = {c)},
         legend style={at={(0,1)},anchor=north west}]
         \addplot[only marks, mark = x, red] 
         table[col sep = comma, x = duties_per_permission , y = runtime_seconds] {experiments/test_results_duties/duties_sotw_size_100.csv};
         \addplot[red, mark = x] 
         table[col sep = comma, x = duties_per_permission , y = runtime_seconds] {experiments/test_results_duties/avg_duties_sotw_size_100.csv};
         \addplot[only marks, mark = o, blue] 
         table[col sep = comma, x = duties_per_permission , y = runtime_seconds] {experiments/test_results_duties/duties_sotw_size_300.csv};
         \addplot[blue, mark = o] 
         table[col sep = comma, x = duties_per_permission , y = runtime_seconds] {experiments/test_results_duties/avg_duties_sotw_size_300.csv};
         \addplot[only marks, mark = square, black] 
         table[col sep = comma, x = duties_per_permission , y = runtime_seconds] {experiments/test_results_duties/duties_sotw_size_500.csv};
         \addplot[black, mark = square] 
         table[col sep = comma, x = duties_per_permission , y = runtime_seconds] {experiments/test_results_duties/avg_duties_sotw_size_500.csv};
         \legend{100 events,,300 events,,500 events}
         \nextgroupplot[xmax = 100, 
         xlabel = {Constraints per Rule},
         ylabel = {},
         ymax = 0.5,
         title = {d)},
         legend style={at={(0,1)},anchor=north west}]
         \addplot[only marks, mark = x, red] 
         table[col sep = comma, x = constraints_number_min , y = runtime_seconds] {experiments/test_results_constraints/constraints_sotw_size_100.csv};
         \addplot[red, mark = x] 
         table[col sep = comma, x = constraints_number_min , y = runtime_seconds] {experiments/test_results_constraints/avg_constraints_sotw_size_100.csv};
         \addplot[only marks, mark = o, blue] 
         table[col sep = comma, x = constraints_number_min , y = runtime_seconds] {experiments/test_results_constraints/constraints_sotw_size_300.csv};
         \addplot[blue, mark = o] 
         table[col sep = comma, x = constraints_number_min , y = runtime_seconds] {experiments/test_results_constraints/avg_constraints_sotw_size_300.csv};
         \addplot[only marks, mark = square, black] 
         table[col sep = comma, x = constraints_number_min , y = runtime_seconds] {experiments/test_results_constraints/constraints_sotw_size_500.csv};
         \addplot[black, mark = square] 
         table[col sep = comma, x = constraints_number_min , y = runtime_seconds] {experiments/test_results_constraints/avg_constraints_sotw_size_500.csv};
         \legend{100 events,,300 events,,500 events}
        \end{groupplot}
    \end{tikzpicture}
    \caption{Performance metrics of \ourevaluator}
    \label{fig:performance}
\end{figure}

The results in Figure \ref{fig:performance} show the scalability tests we performed on \ourevaluator against different metrics, namely: the number of rules in the policy, the number of events in the state of the world, and the complexity of rules in terms of number of duties per permission, and constraints per rule. 
In Figure \ref{fig:performance}a), we examine the effects of the size and complexity of the policy on the runtimes. Here we can see that the runtimes increase linearly with respect to the size of the policy, and this behaviour doesn't change significantly even against policies with all rule types (complex policies). In Figure \ref{fig:performance}b), we examine the effect of the size of the state of the world (or number of events) on the runtimes. Once again, we can see a mostly linear behaviour, where the size of the policy alters the speed at which it increases. Then, in Figure \ref{fig:performance}c) we can also see that increasing the number of duties per permission increases the runtimes linearly. Finally, Figure \ref{fig:performance}d) shows a most notable result, where increasing the number of constraints per rule has almost no effect on the runtimes. This is most likely due to the early exit condition when checking multiple constraints, that terminates constraint evaluation after the first evaluation to false.

In order to run a comparison between \ourevaluator and the state-of-the-art \ghentEvaluator, we used the test cases found in the SolidLab ODRL Test Suite~\footnote{ODRL Test Suite: https://github.com/SolidLabResearch/ODRL-Test-Suite}. 

\begin{figure}[ht!]
    \centering
    \begin{tikzpicture}
        \begin{groupplot}[
            group style={group size=1 by 1,  vertical sep = 1.8cm, horizontal sep = 1.0cm},
            ybar,
            ymode = log,
            width=\linewidth,
            height=0.25\linewidth,
            ylabel={Runtime (ms)},
            xtick =data,
            xticklabel = \empty,
            enlargelimits=0.05,
            ymajorgrids=true,
            xmajorgrids=false,
            grid=none,
            grid style={solid, white!30},
            tick style={draw=none},
            legend style={at={(0.5,-0.1)},
            anchor=north,legend columns=-1},
            ybar interval=0.5,
            ],
            \nextgroupplot[]
            \addplot
            table[col sep = comma, x = id , y = runtime] {experiments/test_force/oval.csv};
            \addplot[red, fill=red!30]
            table[col sep = comma, x = id , y = runtime] {experiments/test_force/ghent.csv};
            \legend{\ourevaluator,\ghentEvaluator}
        \end{groupplot}
    \end{tikzpicture}
    \caption{Runtimes of test cases in the SolidLab Test Suite in logscale.}
    \label{fig:comparison1}
\end{figure}

The results in Figure \ref{fig:comparison1} show a comparison of \ourevaluator against the \ghentEvaluator for the access control scenario against the test cases in the ODRL Test Suite. The runtimes are measured in milliseconds and shown in log scale because of the large difference. One can see that times are quite stable for both systems when the policies contain only permissions and constraints with datetimes, with an average of 30.32 ms for \ourevaluator and 2595.87 ms for the \ghentEvaluator.

However, there is a significant spike in three test cases, which are permissions with a large logical constraint (a disjunction of over 100 pairs of datetime constraints). For these cases, \ourevaluator takes slightly over 150 ms, whereas the \ghentEvaluator takes around 27 seconds. We note, however, that the \ghentEvaluator also performs reasoning as part of its evaluation, which may explain part of the significant overhead.

Regarding the actual result of the evaluation, the \ghentEvaluator has three possible values: allowed, not allowed or ``not enough information'', whereas \ourevaluator can only respond true or false. For the purpose of comparison, we assumed both ``not allowed'' and ``not enough information'' are false in our system (prohibited-by-default). With this in consideration, our evaluators agreed on 54 out of 68 test cases. Most discrepancies were either because the \ghentEvaluator performs reasoning while \ourevaluator does not, or because \ourevaluator supports duties while the \ghentEvaluator does not.
This suggests that the \ghentEvaluator may be more well-suited to use cases that require reasoning, whereas \ourevaluator for monitoring or other performance-sensitive scenarios. 

Regarding duties, the semantics adopted by \ourevaluator interprets the ODRL specification that a ``duty is fulfilled if all its constraints are satisfied and if its action, with all refinements satisfied, has been exercised''~\cite{iannella2018odrl}, as the existence of an event that matches the duty rule. This models in a natural way cases when a duty, such as the action of ``anonymisation'', allows certain future actions, such as ``sharing'', without setting a limit on how many times these actions can be performed. 
The \ghentEvaluator, instead, interprets duties in a stricter sense, by requiring \emph{duty reports} to map individual access requests with a specific fulfillments of a duty. This stricter interpretation models more naturally cases where each duty fulfilment enables an action to be performed just once, for example when each ``payment'' action allows a single ``seat reservation'' action. 

\section{Conclusion and Future Work}

This paper presented \ourevaluator, the first ODRL Evaluator grounded in a formal mathematical model of ODRL semantics. Moreover, we have formally defined the ODRL Evaluation problem for the access control, and offline and online monitoring, motivated by the different scenarios found in the literature where ODRL evaluation can be applied. We described the novel evaluation algorithm implemented by \ourevaluator, that supports all of the different evaluation problems in a scalable way. We have validated the efficiency of our system experimentally demonstrating linear complexity with respect to increases in both policy complexity and size of the state of the world it is evaluated on. The experiments in this paper focused mostly on the scalability of the system against different metrics, while correctness is backed up by formal semantics. Currently, our evaluator is being used in major EU projects with real-world use cases and industrial partners. 
Future work on the evaluator will focus on extending it with reasoning capabilities, supporting set operators in constraints, and adapting it to leverage existing semantic technologies such as the ODRL Compliance Report Model~\cite{esteves2025capturing}.

\paragraph*{Supplemental Material Statement:} Source code, documentation and test scripts are available on Github \footnote{\url{https://github.com/DIPS-Tools/odrl-Engine/}} and are archived on Zenodo \cite{paolo_2026_21415300}.

\bibliography{main}
\bibliographystyle{splncs04}

\end{document}